\documentclass[11pt]{article}

\usepackage[final]{acl}

\usepackage{times}
\usepackage{latexsym}
\usepackage{amssymb}

\usepackage[T1]{fontenc}

\usepackage[utf8]{inputenc}

\usepackage{microtype}

\usepackage{inconsolata}

\usepackage{graphicx}
\usepackage{hyperref}
\usepackage{url}
\usepackage{graphicx}
\usepackage{float} 
\usepackage{algpseudocode}
\usepackage{algorithm}
\usepackage{amssymb}
\usepackage{booktabs} 
\usepackage{amsthm}
\usepackage{amsmath}
\usepackage{natbib}

\newtheorem{proposition}{Proposition}[section]

\newcommand{\R}{\mathbb{R}}
\title{Mitigating Forgetting in Continual Learning with Selective Gradient Projection}

\author{
Anika Singh\thanks{Lead Author} \quad Aayush Dhaulakhandi \quad Varun Chopade \quad Likhith Malipati \quad \\ \textbf{David Martinez}\thanks{Senior Author} \quad \textbf{Kevin Zhu}\footnotemark[2]
\\ \\
Algoverse AI Research\\
\texttt{anikasingh715@gmail.com, kevin@algoverse.us}
}

\begin{document}
\maketitle
\begin{abstract}
As neural networks are increasingly deployed in dynamic environments, they face the challenge of catastrophic forgetting, the tendency to overwrite previously learned knowledge when adapting to new tasks, resulting in severe performance degradation on earlier tasks. We propose Selective Forgetting-Aware Optimization (SFAO), a dynamic method that regulates gradient directions via cosine similarity and per-layer gating, enabling controlled forgetting while balancing plasticity and stability. SFAO selectively projects, accepts, or discards updates using a tunable mechanism with efficient Monte Carlo approximation. Experiments on standard continual learning benchmarks show that SFAO achieves competitive accuracy with markedly lower memory cost, a 90$\%$ reduction, and improved forgetting on MNIST datasets, making it suitable for resource-constrained scenarios.
\end{abstract}

\section{Introduction}
Deep neural networks exhibit remarkable proficiency under static environments but degrade significantly in non-stationary learning environments, where the input-output distribution evolves over time \citep{parisi2019continual}. In Continual Learning (CL), where models must learn a sequence of tasks without revisiting previous data, this degradation manifests as catastrophic forgetting \citep{EmpiricalForgetting2013}. The root cause lies in gradient-induced interference, whereby updates for new tasks disrupt previously consolidated knowledge, causing subspace collapse in the parameter space and destabilizing learned representations \citep{lopezpaz2022gradientepisodicmemorycontinual}. 

This challenge is particularly acute in safety critical domains such as autonomous driving, medical diagnostics, and cybersecurity, where models must adapt to emerging patterns such as evolving traffic scenarios, novel disease classes, or new malware signatures without compromising prior expertise \citep{hamedi2025federated}. Failure to maintain stability in such contexts leads to diminished reliability, costly retraining, and large computational overhead \citep{Armstrong_2022, lesort2020continuallearningtacklingcatastrophic}. Consequently, mitigating forgetting while preserving adaptability remains a foundational objective in CL research.

We introduce SFAO, an approach that selectively regulates gradient updates. On each layer, SFAO either accepts, projects, or discards a step based on the cosine alignment with previously stored directions. This provides a lightweight and tunable mechanism, which can be used for controlling updates without requiring a large memory buffers or fixed regularization. 

\subsection{Contributions}
\begin{enumerate}
   \item A simple per-layer gating rule that accepts, projects, or discards updates based on cosine similarity, offering a controllable way to manage gradient updates.
    \item A gradient filtering mechanism that discards conflicting or uninformative updates, enhancing knowledge retention and improving generalization across sequential tasks. 
    \item A conceptually simple optimizer that achieves strong memory-forgetting trade-offs without relying on state-of-the-art accuracy.
\end{enumerate}






\section {Preliminaries}
\subsection{Continual Learning}
In continual learning (CL), a model is trained on a sequence of $T$ tasks 
\[
\mathcal{D}_1, \mathcal{D}_2, \dots, \mathcal{D}_T,
\] 
where each task $\mathcal{D}_t = \{(x_i^{(t)}, y_i^{(t)})\}_{i=1}^{n_t}$ is sampled from a distribution $\mathcal{P}_t(x,y)$. Unlike classical i.i.d.~training, the distributions $\{\mathcal{P}_t\}$ are non-stationary and past data $\mathcal{D}_1,\dots,\mathcal{D}_{t-1}$ is typically inaccessible when training on $\mathcal{D}_t$. 

The model parameters $\theta$ are updated using stochastic gradient-based optimization techniques
\[
g_t = \nabla_\theta \mathcal{L}_t(\theta),
\]
where $\mathcal{L}_t$ is the loss for task $t$. A central challenge is \emph{catastrophic forgetting}: learning new tasks degrades performance on earlier tasks. Formally, the forgetting on task $i$ after all $T$ tasks is
\[
F_i = \max_{t \leq T} a_{i,t} - a_{i,T},
\]
where $a_{i,t}$ denotes accuracy on task $i$ after training task $t$. To better quantify the ability for a model to remain robust to new tasks, we use \emph{average forgetting}, defined as $F = \frac{1}{T-1}\sum_{i=1}^{T-1} F_i$. Additional measures include \emph{Average Accuracy} (mean accuracy across all tasks at the end of training), \emph{Backward Transfer} (BWT), and the \emph{Plasticity–Stability Measure} (PSM), which together capture the tradeoff between learning new knowledge and retaining old knowledge.

\subsection{Gradient Interference: A Geometric and First-Order View}
\label{sec:interference}

Let $\{\mathcal{D}_i\}_{i=1}^{t-1}$ denote previously learned tasks with losses $\{\mathcal{L}_i\}$ and let $\mathcal{L}_t$ be the current task. Write $g_i(\theta) \!=\! \nabla_\theta \mathcal{L}_i(\theta)$ and $g_t(\theta)\!=\!\nabla_\theta \mathcal{L}_t(\theta)$. For a small step $\theta^+ = \theta - \eta u$ (learning rate $\eta>0$ and update direction $u$), a first-order Taylor expansion gives the instantaneous change on a past task $i$:
\begin{equation}
\Delta \mathcal{L}_i
~\triangleq~
\mathcal{L}_i(\theta^+) - \mathcal{L}_i(\theta)
~=~
-\eta\, g_i^\top u \;+\; O(\eta^2).
\label{eq:firstorder}
\end{equation}
\textbf{Interference} on task $i$ occurs when $g_i^\top u < 0$ (loss increases); \textbf{synergy} occurs when $g_i^\top u > 0$ (loss decreases). Define the \emph{interference risk} of an update $u$ against a set $\mathcal{G}\subset\mathrm{R}^d$ of stored directions by
\begin{equation}
\mathcal{R}(u;\mathcal{G})
~=~
\max_{g\in\mathcal{G}} \, \big(-g^\top u\big)_+ ,
\quad (x)_+ := \max\{x,0\}.
\label{eq:risk}
\end{equation}
Minimizing risk, $\mathcal{R}$, encourages $g^\top u \ge 0$ for all $g\in\mathcal{G}$ in the small-step regime, which by \eqref{eq:firstorder} eliminates first-order forgetting on the represented directions.

Let $\mathcal{S}=\mathrm{span}(\mathcal{G})$ and $P_\mathcal{S}$ be the \emph{orthogonal} projector onto $\mathcal{S}$. Consider the feasibility cone
\begin{equation}
\mathcal{C} \;=\;\{u\in \R^d ~:~ g^\top u \ge 0 \;\;\forall g\in\mathcal{G}\}.
\end{equation}
An interference-safe step can be posed as the inequality-constrained Euclidean projection
\begin{equation}
\min_{u\in\R^d}\; \tfrac12 \|u-g_t\|_2^2
\quad\text{s.t.}\quad
g^\top u \;\ge\; 0 \quad \forall\, g\in\mathcal{G}.
\label{eq:qp-ineq}
\end{equation}
Problem~\eqref{eq:qp-ineq} projects $g_t$ onto the polyhedral cone $\mathcal{C}$ and its solution \emph{need not} be orthogonal to $\mathcal{S}$.

A stricter surrogate is the equality-constrained projection
\begin{equation}
\min_{u\in\R^d}\; \tfrac12 \|u-g_t\|_2^2
\quad\text{s.t.}\quad
g^\top u \;=\; 0 \quad \forall\, g\in\mathcal{G},
\label{eq:qp-eq}
\end{equation}
which enforces $u\in\mathcal{S}^\perp$ and whose solution is obtained by solving the Lagrangian (Appendix~\ref{appendix:proofs}):
\begin{equation}
u^\star \;=\; (I - P_{\mathcal{S}})\,g_t.
\label{eq:orth-proj}
\end{equation}

\begin{proposition}[First-order safety for represented tasks]
If $u = (I - P_\mathcal{S})\,g_t$, then $g^\top u = 0$ for all $g\in\mathcal{S}$, and thus for any past task $i$ whose gradient $g_i\in\mathcal{S}$ we have $\Delta \mathcal{L}_i = O(\eta^2)$. Hence orthogonal projection removes first-order forgetting on tasks whose gradients are represented in $\mathcal{S}$.
\end{proposition}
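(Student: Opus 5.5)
The plan has two steps: use the defining properties of the orthogonal projector to show $g^\top u = 0$ for every $g \in \mathcal{S}$, and then substitute this into the first-order expansion \eqref{eq:firstorder}.

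\textbf{Step 1 (orthogonality).} I will use that $P_\mathcal{S}$ is the orthogonal projector onto $\mathcal{S}=\mathrm{span}(\mathcal{G})$. Hence it is symmetric, $P_\mathcal{S}^\top = P_\mathcal{S}$, and it fixes $\mathcal{S}$: $P_\mathcal{S} g = g$ for all $g\in\mathcal{S}$. For $g \in \mathcal{S}$ and $u=(I-P_\mathcal{S})g_t$, I will compute
\[
g^\top u = g^\top g_t - g^\top P_\mathcal{S} g_t = g^\top g_t - (P_\mathcal{S} g)^\top g_t = g^\top g_t - g^\top g_t = 0 .
\]
Equivalently, $u \in \mathcal{S}^\perp$, which is the content of \eqref{eq:orth-proj}. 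If one prefers an explicit form, choose an orthonormal basis $Q$ of $\mathcal{S}$ (for instance by Gram--Schmidt on $\mathcal{G}$). Then $P_\mathcal{S}=QQ^\top$, and the identity reduces to $Q^\top(I-QQ^\top)=0$.

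\textbf{Step 2 (first-order safety).} Let $i$ be a past task with $g_i=\nabla_\theta\mathcal{L}_i(\theta)\in\mathcal{S}$. Here $\mathcal{S}$ is built from directions stored at the current $\theta$, or more generally it contains $g_i(\theta)$. Step 1 gives $g_i^\top u=0$. Substituting into \eqref{eq:firstorder} yields $\Delta\mathcal{L}_i = -\eta\cdot 0 + O(\eta^2) = O(\eta^2)$. For the remainder to be legitimately $O(\eta^2)$, I will assume $\mathcal{L}_i$ is twice continuously differentiable, or has an $L$-Lipschitz gradient, near $\theta$. Taylor's theorem with remainder then gives $|\Delta\mathcal{L}_i| \le \tfrac{L}{2}\eta^2\|u\|^2 \le \tfrac{L}{2}\eta^2\|g_t\|^2$, using $\|u\|\le\|g_t\|$ because $I-P_\mathcal{S}$ is itself an orthogonal projector. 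This bound makes the constant in the $O(\eta^2)$ uniform and explicit.

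The only real subtlety is bookkeeping rather than algebra. The proposition concerns gradients evaluated at the current $\theta$, and stored directions may be stale. I will therefore state explicitly that the hypothesis is $g_i(\theta)\in\mathcal{S}$ at the point of expansion. The smoothness assumption needed for the remainder also has to be made explicit. With these two points made precise, the conclusion that orthogonal projection removes first-order forgetting follows immediately.
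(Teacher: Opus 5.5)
Your proof is correct and follows the paper's argument: use $P_{\mathcal{S}}g=g$ for $g\in\mathcal{S}$, together with the symmetry of $P_{\mathcal{S}}$, to get $g^\top(I-P_{\mathcal{S}})g_t=0$, then substitute into \eqref{eq:firstorder}. Your additions — the explicit smoothness assumption giving $|\Delta\mathcal{L}_i|\le \tfrac{L}{2}\eta^2\|g_t\|^2$, and the requirement that $g_i(\theta)\in\mathcal{S}$ at the current iterate — make precise what the paper leaves implicit.
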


\emph{Proof.} For $g\in\mathcal{S}$ we have $P_\mathcal{S}g=g$, so
$g^\top (I-P_\mathcal{S})g_t = (P_\mathcal{S}g)^\top g_t - g^\top g_t = 0$. Plug into \eqref{eq:firstorder}. \qed

\subsection{Orthogonal Gradient Descent (OGD)}
Orthogonal Gradient Descent (OGD) \citep{OGD2019} is a geometry-based continual learning method which addresses gradient interference by constraining updates to directions orthogonal to past gradients. Let $\mathcal{S} = \text{span}\{g_1, \dots, g_N\}$ be the subspace of stored gradients. OGD projects a new gradient $g_t$ onto the orthogonal complement of $\mathcal{S}$:
\[
g_t^\perp = \text{Proj}_{\mathcal{S}^\perp}(g_t) 
= g_t - \sum_{i=1}^N \frac{g_t^\top g_i}{\|g_i\|^2} g_i.
\]
This guarantees that the update does not interfere with previously learned directions, thereby preserving earlier task performance. OGD’s geometric clarity makes it an appealing baseline, but it is computationally costly: storing all or a large subset of past gradients requires $O(Nd)$ memory (for $d$-dimensional gradients), and each update involves $O(Nd)$ dot products. Subsequent works have sought to approximate this projection using low-rank subspaces or memory buffers to improve scalability.



\section {Selective Forgetting-Aware Optimizer}

\subsection{Similarity-Gated Update Rule (SFAO)}
Let $\theta_t\in\mathbb{R}^d$ denote the parameters at step $t$ and
$g_t=\nabla_\theta \mathcal{L}_t(\theta_t)$ the mini-batch gradient.
We maintain a buffer of past gradients with span
$\mathcal{S}=\mathrm{span}\{g_1,\dots,g_N\}$ and orthogonal projector $P_{\mathcal{S}}$.

Let $Q\in\R^{d\times r}$ be an \emph{orthonormal} basis for $\mathcal{S}$ (e.g., incremental Gram–Schmidt or compact SVD), so $P_{\mathcal{S}}=QQ^\top$.

Given a Monte Carlo subset $\mathcal{C}\subseteq\{1,\dots,N\}$ of size $k\ll N$,
define the sampled maximum cosine alignment
\begin{equation}
s_t \;=\; \max_{i\in\mathcal{C}} \frac{g_t^\top g_i}{\|g_t\|\,\|g_i\|}.
\end{equation}
Because $\mathcal{C}\subseteq\{1,\dots,N\}$, $s_t$ is a deterministic \emph{lower bound} on the true maximum alignment over the buffer.

Choose thresholds $\lambda_{\text{proj}} \le \lambda_{\text{accept}}$ in $[-1,1]$ and, if one wishes to accept only synergistic updates, set $\lambda_{\text{accept}}\ge 0$.
Then the SFAO \emph{gated direction} $u_t$ is
\begin{normalsize}
\begin{equation}
u_t =
\begin{cases}
g_t, & s_t > \lambda_{\text{accept}} \text{ (accept)}\\[2pt]
(I - P_{\mathcal{S}})g_t, & \lambda_{\text{proj}} < s_t \le \lambda_{\text{accept}} \text{ (project)}\\[2pt]
0, & s_t \le \lambda_{\text{proj}} \text{ (discard)}
\end{cases}
\label{eq:sfao-udir-corrected}
\end{equation}
\begin{equation}
\boxed{\theta_{t+1} = \theta_t - \eta\,u_t}
\label{eq:sfao-sgd}
\end{equation}
\end{normalsize}

\paragraph{Recovering special cases (corrected).}
\begin{itemize}
\item \textbf{SGD}: empty buffer or $\lambda_{\text{accept}}=-1$ $\Rightarrow$ $u_t=g_t$.
\item \textbf{Always-project (OGD behavior)}: set $\lambda_{\text{proj}}=-1,\ \lambda_{\text{accept}}=1$ so every step falls in the project region, yielding $u_t=(I-P_{\mathcal{S}})g_t$.
\item \textbf{Hard reject}: $\lambda_{\text{proj}}=1$ discards all updates ($u_t=0$).
\end{itemize}

\paragraph{With momentum / weight decay.}
With momentum $m_t=\beta m_{t-1}+(1-\beta)u_t$ and weight decay $\lambda$,
\begin{equation}
\theta_{t+1} \;=\; (1-\eta\lambda)\,\theta_t \;-\; \eta\,m_t.
\end{equation}
\subsection{Monte Carlo Approximation}
Computing $\cos\theta$ against all stored gradients is prohibitively expensive when the buffer size $B$ is large. To mitigate this, we maintain a buffer $\{g_i\}_{i=1}^B$ of past gradients and randomly sample $k \ll B$ directions at each update:
\[
\hat{\cos \theta} = \max_{j=1,\dots,k} \frac{g_t^\top g_{i_j}}{\|g_t\| \cdot \|g_{i_j}\|}, \quad g_{i_j} \sim \mathcal{S}.
\]
This approximation reduces the dot-product complexity from $O(Bd)$ to $O(kd)$ per step, offering a substantial computational savings. Importantly, the sampled maximum is a \emph{conservative} estimate: because only $k$ candidates are considered, $\hat{\cos \theta}$ tends to underestimate the true maximum alignment. While downward-biased in expectation, this bias is benign and even advantageous in practice, as it favors projection or rejection over direct acceptance. Empirically, this conservative tendency aligns with the observed stability gains of our method, providing both efficiency and robustness at no additional cost.

\subsection{Suppressing Gradient Interference with Selective Projection}
\label{sec:methods-suppress}

Building on Section~\ref{sec:interference}, recall that interference occurs when
$g_i^\top u < 0$ for a past gradient $g_i$. 
GEM \citep{lopezpaz2022gradientepisodicmemorycontinual} prevents such interference by solving a quadratic program 
with \emph{inequality constraints} $g^\top u \ge 0$ for stored directions (Eq.~\ref{eq:qp-ineq}), 
projecting $g_t$ onto the corresponding feasible cone. 
By contrast, OGD \citep{OGD2019} and GPM \citep{saha2021gradientprojectionmemorycontinual} adopt the stricter 
\emph{equality-constrained} view, removing all components in the stored subspace 
$\mathcal{S} = \mathrm{span}(\mathcal{B})$ via the orthogonal update 
$u = (I-P_{\mathcal{S}})g_t$ (Eq.~\ref{eq:orth-proj}), which minimizes 
first-order forgetting for tasks whose gradients lie in $\mathcal{S}$.

SFAO extends these ideas by introducing a \emph{similarity-gated rule} that selects among
accept, project, and discard operations. To analyze its guarantees, define the
sampled interference risk
\[
\widehat{\mathcal{R}}(u;\mathcal{C})
= \max_{g\in\mathcal{C}} (-g^\top u)_+,
\]
for a subset $\mathcal{C}\subseteq\mathcal{B}$ of stored directions.

\paragraph{Project region.}
If $u=(I-P_{\mathcal{S}})g_t$, then $g^\top u=0$ for all $g\in\mathcal{B}$,
hence $\widehat{\mathcal{R}}(u;\mathcal{C})=0$. 
This recovers the first-order safety guarantees of OGD/GPM for tasks represented in $\mathcal{S}$.

\paragraph{Accept region.}
If $\hat{s}_t > \lambda_{\text{accept}} \ge 0$, then even the 
worst sampled cosine similarity is nonnegative. 
For the sampled $g^\star$ attaining $\hat{s}_t$ we have 
$(g^\star)^\top g_t \ge 0$, so $\widehat{\mathcal{R}}(g_t;\mathcal{C})=0$. 
(The restriction $\lambda_{\text{accept}}\ge 0$ is essential; 
otherwise negative-alignment directions could still be accepted.)

\paragraph{Discard region.}
If $u=0$, the update is null and trivially safe.

\paragraph{Conservativeness under sampling.}
Since $\hat{s}_t = \max_{g\in\mathcal{C}} \cos(g_t,g) \le 
s_t^\star = \max_{g\in\mathcal{B}} \cos(g_t,g)$,
sub-sampling provides a deterministic lower bound on the true maximum alignment. 
Therefore, relative to full-buffer decisions, SFAO with finite $k$ 
can only \emph{increase} the likelihood of projection or discarding (never reduce it), 
making the method conservative in suppressing interference.

\textbf{Discard region.} $u=0$ is trivially safe.

Since $\hat s_t\le s_t^\star$, sub-sampling is conservative: relative to decisions made with the full buffer, it can only \emph{increase} the likelihood of projecting or discarding (never reduce it), which further suppresses interference at fixed thresholds.

\section{Experiments and Results}
We evaluate on standard CL benchmarks for comparability with prior work: Split MNIST and Permuted MNIST \citep{LeCun2005TheMD, EmpiricalForgetting2013}, Split CIFAR-10/100 \citep{krizhevsky2009cifar}, and Tiny ImageNet. 

\vspace{0.75em}

\textbf{Baselines.} 
\textbf{(1) OGD} \citep{OGD2019}: A gradient projection method that enforces orthogonality to previously learned parameter subspaces. It is our primary baseline given its geometric alignment with SFAO's projection-based approach.
\textbf{(2) EWC} \citep{EWC2016}: A seminal regularization-based method that constrains parameter updates according to their estimated importance to prior tasks via the Fisher Information Matrix. This provides a representative benchmark for weight-consolidation approaches.
\textbf{(3) SI} \citep{SI2017}: An efficient path-regularization method that computes parameter importance online and penalizes changes to parameters deemed critical for previous tasks.
\textbf{(4) SGD}: Vanilla stochastic gradient descent, which lacks any mechanism to mitigate catastrophic forgetting, is included as a naive baseline to illustrate the magnitude of improvement achieved by SFAO.

\vspace{0.75em}

\subsection{Method Stability and Architectural Requirements}

\textbf{Observation.} During initial experiments, we discovered that regularization-based methods EWC and SI exhibited significant instability when paired with lightweight architectures, often diverging or producing invalid losses on the Simple CNN backbone. This instability required switching to more complex architectures to achieve stable training.

\textbf{Fix.} We address this by conducting experiments on both architectural settings. Initially, we evaluate geometry-aware methods (OGD and SFAO) on Simple CNN and regularization methods (EWC and SI) on Wide ResNet-28×10 (WRN28×10) due to stability constraints. Subsequently, when computational resources became available, we conducted additional experiments evaluating all methods on WRN28×10 to enable direct comparisons.

\textbf{Implication.} While architectural adjustments can resolve stability issues, this approach highlights a fundamental limitation: methods that require specific architectural choices to function properly lack the generalizability needed for real-world deployment. In practice, practitioners cannot always guarantee access to large or specially designed models, making architecture-agnostic stability crucial for continual learning methods.

\textbf{New Model Results.} We present results for CIFAR datasets under both experimental settings. The first set of tables shows results with Simple CNN for geometry-aware methods and WRN28×10 for regularization methods. The second set of tables shows all methods evaluated on WRN28×10, enabling direct head-to-head comparisons. SFAO demonstrates consistent performance across both architectural settings without requiring backbone-specific adjustments, positioning it as a more generalizable solution that maintains stability regardless of model capacity constraints.

\paragraph{Setup.}
For MNIST datasets, all baselines use a Simple MLP consisting of a flattened input layer, a single hidden layer with 784 units and ReLU activation, followed by a linear classifier to C classes. 

For CIFAR experiments, we present results under two architectural settings. In the first setting, geometry-aware methods (OGD, SFAO, SGD) use a Simple CNN consisting of two convolutional blocks with 3×3 kernels (32 and 64 channels respectively), each followed by ReLU activation and 2×2 max pooling, then a 128-unit fully connected layer and a linear classifier. Regularization methods (EWC, SI) use WRN28×10 with standard formulation including 28 layers, widening factor 10, batch normalization, and residual connections. In the second setting, all methods are evaluated on WRN28×10 to enable direct head-to-head comparisons.

All reported results include standard deviations computed over 5 runs with different random seeds, ensuring statistical reliability while remaining within our compute budget. 

\paragraph{Architectures.}
For MNIST datasets, all baselines use a Simple MLP: flattened input $\to$ a single hidden layer (784 units, ReLU) $\to$ linear classifier to $C$ classes. 
For Group (A) CIFAR experiments (OGD, SFAO, SGD) we use a \textbf{Simple CNN} consisting of two convolutional blocks with $3\times3$ kernels (32 and 64 channels), each followed by ReLU and $2\times2$ max pooling, then a 128-unit fully connected layer and a linear classifier. 
For Group (B) CIFAR experiments (EWC, SI) we use a \textbf{WRN28$\times$10} (standard formulation with 28 layers, widening factor 10, batch normalization, and residual connections), which provides the capacity and stability required by these regularization-based methods.

\paragraph{Hyperparameters.}
Across all datasets, we use an SGD optimizer with a momentum of 0.9, a learning rate of $10^{-3}$, batch size of 32, and 2 epochs per task to control compute and isolate forgetting behavior. For EWC and SI, we follow Avalanche’s implementation\footnote{We build on the open-source \texttt{Avalanche} framework \citep{JMLR:v24:23-0130}, available at \url{https://github.com/ContinualAI/continual-learning-baselines/tree/main}.} and select regularization strength $\lambda$ by a small grid search on early tasks. 
For SFAO, we sweep cosine thresholds $\lambda_{\text{proj}}$ and $\lambda_{\text{accept}}$ in the range 0.80--0.95 (discard threshold fixed at $-1\times10^{-4}$, max storage capped at 200), and display the best result.


\paragraph{Compute Efficiency.} 
All experiments were run on a single NVIDIA A40 GPU (9 vCPUs, 48GB host memory). SFAO introduces minimal overhead—training time increased by less than 6-8\% compared to vanilla SGD.

\subsection{Split MNIST Benchmark}
\begin{table}[ht]
\centering
\resizebox{0.97\columnwidth}{!}{
\begin{tabular}{lccccc}
\hline
\multicolumn{6}{c}{\textbf{Accuracy $\pm$ Std. Deviation (\%)}} \\
& \textbf{Task 1} & \textbf{Task 2} & \textbf{Task 3} & \textbf{Task 4} & \textbf{Task 5} \\
\hline
SGD  & 67.4$\pm$0.5 & 75.9$\pm$0.8 & 47.4$\pm$1.0 & 97.0$\pm$0.2 & 91.0$\pm$0.3 \\
EWC  & 12.8$\pm$0.4 & 11.5$\pm$0.9 & 31.8$\pm$0.7 & 12.0$\pm$0.4 & \textbf{99.8}$\pm$0.1 \\
SI   & 93.9$\pm$0.3 & \textbf{92.6$\pm$0.5} & \textbf{99.3}$\pm$0.1 & \textbf{99.8}$\pm$0.4 & 99.2$\pm$0.1 \\
OGD  & \textbf{99.9}$\pm$0.0 & 68.0$\pm$1.2 & 54.6$\pm$1.0 & 74.7$\pm$0.8 & 42.7$\pm$1.5 \\
SFAO & 93.6$\pm$0.4 & 79.3$\pm$0.9 & 47.2$\pm$1.1 & 95.6$\pm$0.3 & 86.8$\pm$0.5 \\
\hline
\end{tabular}}
\caption{\emph{Split MNIST}: The accuracy of the model after sequential training on five tasks. The best continual results are highlighted in \textbf{bold}.}
\label{tab:split-mnist}
\end{table}
As shown in Table~\ref{tab:split-mnist}, SI attains the best overall performance with minimal forgetting. 
SFAO is not as strong as SI or OGD on this benchmark; however, it substantially improves over EWC and SGD in terms of retention while maintaining high per-task accuracy. 
These results position SFAO as a memory-efficient, geometry-aware optimizer that compares favorably to regularization baselines on MNIST-scale problems.

\subsection{Permuted MNIST Benchmark}
\begin{table}[ht]
\centering
\resizebox{0.685\columnwidth}{!}{
\begin{tabular}{lccc}
\hline
\multicolumn{4}{c}{\textbf{Accuracy $\pm$ Std. Deviation (\%)}} \\
& \textbf{Task 1} & \textbf{Task 2} & \textbf{Task 3} \\
\hline
SGD  & 75.7$\pm$0.6 & 81.7$\pm$0.4 & 83.5$\pm$0.3 \\
EWC  & 73.0$\pm$0.5 & 75.6$\pm$0.7 & 77.4$\pm$0.6 \\
SI   & \textbf{92.8}$\pm$0.2 & \textbf{95.3}$\pm$0.1 & \textbf{94.9}$\pm$0.1 \\
OGD  & 79.3$\pm$0.4 & 79.8$\pm$0.3 & 81.3$\pm$0.4 \\
SFAO & 76.0$\pm$0.6 & 79.3$\pm$0.5 & 82.8$\pm$0.7 \\
\hline
\end{tabular}}
\caption{\emph{Permuted MNIST}: The accuracy of the model  after sequential training on three permutations ($p_1$, $p_2$, $p_3$). The best continual results are highlighted in \textbf{bold}.}
\label{tab:permuted-mnist}
\end{table}
As shown in Table~\ref{tab:permuted-mnist}, SI achieves the highest accuracy across permutations. However, SFAO produces competitive results and outperforms EWC. SFAO also narrows the average accuracy gap with OGD at higher cosine thresholds (see Appendix A.4)

\subsection{Split CIFAR-100 Benchmark (Without WRN)}
We extended Split CIFAR-100 to 10 tasks following the standard protocol. Table~\ref{tab:cifar-100} reports per-task accuracies for Group A methods on the Simple CNN; Group B methods are shown for context using a WRN28$\times$10. While SFAO underperforms OGD in final accuracy with the Simple CNN backbone, it is notably more consistent across tasks and outperforms OGD on most tasks until the last. This highlights a trade-off: OGD excels at preserving late-task performance, whereas SFAO provides steadier retention throughout training. 
\begin{table*}[ht]
\centering
\resizebox{1.00\textwidth}{!}{
\begin{tabular}{lcccccccccc}
\hline
\multicolumn{11}{c}{\textbf{Accuracy $\pm$ Std. Deviation (\%)}} \\
& \textbf{Task 1} & \textbf{Task 2} & \textbf{Task 3} & \textbf{Task 4} & \textbf{Task 5} & \textbf{Task 6} & \textbf{Task 7} & \textbf{Task 8} & \textbf{Task 9} & \textbf{Task 10} \\
\hline
SGD  & 10.1$\pm$0.3 & 10.1$\pm$0.3 & 8.0$\pm$0.2 & 9.6$\pm$0.2 & 10.4$\pm$0.2 & 10.1$\pm$0.3 & 10.9$\pm$0.3 & 9.0$\pm$0.2 & 11.4$\pm$0.3 & 12.3$\pm$0.3 \\
EWC  & \textbf{19.4}$\pm$0.5 & \textbf{18.2}$\pm$0.4 & 14.5$\pm$0.3 & \textbf{24.7}$\pm$0.5 & \textbf{21.6}$\pm$0.4 & 18.7$\pm$0.3 & 20.9$\pm$0.4 & 15.9$\pm$0.3 & 22.0$\pm$0.4 & 13.5$\pm$0.3 \\
SI & 12.2$\pm$0.8 & 14.0$\pm$0.7 & \textbf{19.1}$\pm$0.9 & 14.4$\pm$0.6 & 16.9$\pm$0.7 & \textbf{32.3}$\pm$1.6 & \textbf{28.4}$\pm$1.3 & \textbf{31.5}$\pm$2.0 & \textbf{37.8}$\pm$2.1 & 43.6$\pm$3.5 \\
OGD  & 8.5$\pm$0.2 & 3.6$\pm$0.1 & 8.0$\pm$0.2 & 6.4$\pm$0.2 & 4.5$\pm$0.2 & 8.4$\pm$0.3 & 21.3$\pm$0.5 & 13.6$\pm$0.4 & 15.90$\pm$1.3 & \textbf{66.0}$\pm$2.4 \\
SFAO & 8.9$\pm$0.3 & 8.3$\pm$0.3 & 9.9$\pm$0.2 & 11.2$\pm$0.2 & 12.5$\pm$0.2 & 11.2$\pm$0.5 & 26.7$\pm$0.8 & 16.8$\pm$2.3 & 21.4$\pm$1.3 & 23.6$\pm$3.8 \\
\hline
\end{tabular}}
\caption{\emph{Split CIFAR-100}: The accuracy of the model after sequential training on all ten tasks. The best continual results are highlighted in \textbf{bold}.}
\label{tab:cifar-100}
\end{table*}
\begin{table*}[ht]
\centering
\resizebox{1.00\textwidth}{!}{
\begin{tabular}{lcccccccccc}
\hline
\multicolumn{11}{c}{\textbf{Accuracy $\pm$ Std. Deviation (\%)}} \\
& \textbf{Task 1} & \textbf{Task 2} & \textbf{Task 3} & \textbf{Task 4} & \textbf{Task 5} & \textbf{Task 6} & \textbf{Task 7} & \textbf{Task 8} & \textbf{Task 9} & \textbf{Task 10} \\
\hline
SGD  & 8.6$\pm$0.5 & 3.9$\pm$0.7 & 9.0$\pm$0.2 & 7.0$\pm$0.4 & 10.2$\pm$0.3 & 7.2$\pm$0.5 & 18.3$\pm$0.3 & 8.7$\pm$0.4 & 15.2$\pm$0.6 & 46.8$\pm$0.2 \\
EWC  & \textbf{19.4}$\pm$0.5 & \textbf{18.2}$\pm$0.4 & 14.5$\pm$0.3 & \textbf{24.7}$\pm$0.5 & \textbf{21.6}$\pm$0.4 & 18.7$\pm$0.3 & 20.9$\pm$0.4 & 15.9$\pm$0.3 & 22.0$\pm$0.4 & 13.5$\pm$0.3 \\
SI & 12.2$\pm$0.8 & 14.0$\pm$0.7 & \textbf{19.1}$\pm$0.9 & 14.4$\pm$0.6 & 16.9$\pm$0.7 & \textbf{32.3}$\pm$1.6 & \textbf{28.4}$\pm$1.3 & \textbf{31.5}$\pm$2.0 & \textbf{37.8}$\pm$2.1 & 43.6$\pm$3.5 \\
OGD  & 10.8$\pm$0.2 & 2.6$\pm$0.3 & 7.2$\pm$0.2 & 7.5$\pm$0.5 & 7.6$\pm$0.4 & 5.6$\pm$0.2 & 21.6$\pm$0.5 & 14.3$\pm$0.3 & 10.8$\pm$0.5 & \textbf{71.4}$\pm$1.1 \\
SFAO & 10.1$\pm$0.7 & 4.0$\pm$0.5 & 9.4$\pm$0.3 & 7.6$\pm$0.4 & 5.0$\pm$0.4 & 7.4$\pm$0.6 & 21.0$\pm$0.8 & 17.4$\pm$1.8 & 19.0$\pm$1.7 & 58.1$\pm$4.3 \\
\hline
\end{tabular}}
\caption{\emph{Split CIFAR-100 with WRN}: The accuracy of the model after sequential training on all ten tasks. The best continual results are highlighted in \textbf{bold}.}
\label{tab:cifar-100-1}
\end{table*}
\subsection{Split CIFAR-100 Benchmark (With WRN)}
We extended Split CIFAR-100 to 10 tasks following the standard protocol. Table~\ref{tab:cifar-100-1} reports per-task accuracies for all methods using the WRN-28×10 backbone, enabling direct comparison across approaches. SFAO is able to demonstrate more consistent retention across earlier tasks and competitive results on mid-sequence tasks. This contrast highlights a trade-off: OGD preserves strong performance on later tasks, whereas SFAO provides steadier performance throughout training. This indicates SFAO achieves a more balanced performance across the sequence, which may be preferable in applications where uniform retention is important.

\subsection{Split CIFAR-10 Benchmark (Without WRN)}

\begin{table*}[ht]
\centering
\begin{minipage}{0.4875\linewidth}
\centering
\resizebox{\linewidth}{!}{
\begin{tabular}{lccccc}
\hline
\multicolumn{6}{c}{\textbf{Simple CNN}} \\
& \textbf{Task 1} & \textbf{Task 2} & \textbf{Task 3} & \textbf{Task 4} & \textbf{Task 5} \\
\hline
SGD  & 49.5$\pm$2.3 & 50.0$\pm$1.8 & 50.0$\pm$2.1 & 50.0$\pm$1.5 & 50.0$\pm$2.0 \\
EWC  & 20.6$\pm$1.2 & 17.5$\pm$0.9 & 19.2$\pm$1.0 & 24.5$\pm$1.8 & 23.6$\pm$1.1 \\
SI   & 70.2$\pm$2.7 & 51.8$\pm$2.5 & 44.1$\pm$2.0 & \textbf{66.3}$\pm$2.8 & \textbf{96.1}$\pm$1.5 \\
OGD  & \textbf{79.3}$\pm$3.1 & 58.0$\pm$2.7 & 51.6$\pm$2.5 & 58.0$\pm$3.0 & 93.0$\pm$1.2 \\
SFAO & 76.5$\pm$2.9 & \textbf{62.4}$\pm$3.2 & \textbf{52.6}$\pm$2.4 & 57.6$\pm$3.0 & 77.0$\pm$2.1\\
\hline
\end{tabular}}
\label{tab:cifar-10}
\caption{Split CIFAR-10 benchmark with Simple CNN backbone.}
\end{minipage}
\hfill
\begin{minipage}{0.4875\linewidth}
\centering
\resizebox{\linewidth}{!}{
\begin{tabular}{lccccc}
\hline
\multicolumn{6}{c}{\textbf{WRN-28$\times$10}} \\
& \textbf{Task 1} & \textbf{Task 2} & \textbf{Task 3} & \textbf{Task 4} & \textbf{Task 5} \\
\hline
SGD  & 77.3$\pm$2.3 & 60.4$\pm$1.8 & 52.5$\pm$2.1 & 51.6$\pm$1.5 & 86.3$\pm$2.0 \\
EWC  & 20.6$\pm$1.2 & 17.5$\pm$0.9 & 19.2$\pm$1.0 & 24.5$\pm$1.8 & 23.6$\pm$1.1 \\
SI   & 70.2$\pm$2.7 & 51.8$\pm$2.5 & 44.1$\pm$2.0 & 66.3$\pm$2.8 & \textbf{96.1}$\pm$1.5 \\
OGD  & \textbf{80.3}$\pm$3.1 & \textbf{63.7}$\pm$2.7 & 53.0$\pm$2.5 & 66.0$\pm$3.0 & 94.7$\pm$1.2 \\
SFAO & 78.7$\pm$2.9 & 56.9$\pm$3.2 & \textbf{55.4}$\pm$2.4 & \textbf{69.9}$\pm$3.0 & 90.9$\pm$2.1 \\
\hline
\end{tabular}}
\label{tab:cifar-10-cnn}
\caption{Split CIFAR-10 benchmark with WRN-28$\times$10 backbone.}
\end{minipage}
\end{table*}

Table 5 reports per-task accuracies for Group A methods (OGD, SFAO, SGD) evaluated on the Simple CNN; EWC and SI are shown for context using a WRN28$\times$10\ and should be treated as qualitative context.\footnote{EWC and SI were evaluated on Wide ResNet-28$\times$10 due to instability / divergence observed on the Simple CNN; see the Setup paragraph.}
Under the lightweight Simple CNN backbone (head-to-head comparison), OGD attains the highest average accuracy overall in our run, while SFAO is competitive on average.
This pattern illustrates the stability–plasticity trade-off: OGD can strongly preserve earlier task performance in certain settings, whereas SFAO provides more balanced per-task behavior and reduced projection frequency (see Appendix A.3). 
We therefore report Group A as direct comparisons and treat Group B as qualitative context only.

\subsection{Split CIFAR-10 Benchmark (With WRN)}
Table 6 reports per-task accuracies for all baselines using the WRN-28×10 backbone, enabling direct comparison across methods. SFAO shows strong and balanced performance across the sequence, achieving the best results on mid-sequence tasks (Task 3 and Task 4) and remaining competitive on the first and last tasks. While SI reaches the highest accuracy on the final task, its earlier performance lags behind SFAO. These results highlight that SFAO achieves a favorable balance between stability and plasticity on Split CIFAR-10, outperforming OGD in several tasks while maintaining consistency throughout training.

\subsection{Split TinyImageNet Benchmark (With WRN)}
\begin{table*}[ht]
\centering
\resizebox{1.00\textwidth}{!}{
\begin{tabular}{lcccccccccc}
\hline
\multicolumn{11}{c}{\textbf{Accuracy $\pm$ Std. Deviation (\%)}} \\
& \textbf{Task 1} & \textbf{Task 2} & \textbf{Task 3} & \textbf{Task 4} & \textbf{Task 5} & \textbf{Task 6} & \textbf{Task 7} & \textbf{Task 8} & \textbf{Task 9} & \textbf{Task 10} \\
\hline
SGD  & 17.4$\pm$1.4 & 19.0$\pm$0.7 & 16.3$\pm$0.9 & 16.9$\pm$0.5 & 19.8$\pm$1.0 & 17.3$\pm$0.5 & 14.6$\pm$1.4 & 18.8$\pm$0.4 & 17.3$\pm$0.7 & 18.3$\pm$1.2 \\
EWC  & 23.8$\pm$0.8 & 25.0$\pm$0.4 & 21.3$\pm$1.1 & 18.2$\pm$0.7 & 25.7$\pm$0.5 & 23.2$\pm$1.3 & 19.6$\pm$0.9 & 22.9$\pm$1.4 & 18.5$\pm$1.3 & 22.9$\pm$2.4 \\
SI  & 6.4$\pm$0.75  & 7.4$\pm$1.4  & 2.9$\pm$1.3  & 9.6$\pm$2.6  &
11.1$\pm$4.0 & 18.2$\pm$3.8 & 19.2$\pm$3.2 &
26.5$\pm$2.9 & \textbf{32.0}$\pm$5.5 & \textbf{46.4}$\pm$6.1 \\
OGD  & 7.5$\pm$1.2 & 9.5$\pm$1.9 & 10.8$\pm$1.4 & 16.2$\pm$1.3 & 14.5$\pm$2.4 & 20.4$\pm$2.8 & 20.7$\pm$2.1 & \textbf{32.2}$\pm$3.0 & 31.4$\pm$2.2 & 45.5$\pm$2.0 \\
SFAO  & \textbf{24.4}$\pm$0.5 & \textbf{25.8}$\pm$0.8 & \textbf{25.3}$\pm$1.3 & \textbf{24.5}$\pm$0.9 & \textbf{29.0}$\pm$1.6 & \textbf{27.5}$\pm$1.5 & \textbf{25.1}$\pm$1.0 & 27.8$\pm$1.5 & 26.9$\pm$1.1 & 26.3$\pm$1.5 \\

\hline
\end{tabular}}
\caption{\emph{Split TinyImageNet}: The accuracy of the model after sequential training on all ten tasks.}
\label{tab:split-tiny-imagenet}
\end{table*} Table~\ref{tab:split-tiny-imagenet} shows that SFAO is competitive on early tasks of Split TinyImageNet, whereas SI excels on the final three tasks and EWC remains strong in the first half. Given the benchmark’s greater complexity (fine-grained categories, higher intra-class variation, and stronger distribution shifts), these trends may reflect differing robustness profiles across difficulty regimes rather than a single global ranking. A plausible explanation is that SFAO’s accept/project mechanism favors rapid adaptation early in the stream, while regularization-based approaches (SI/EWC) offer greater stability later; a definitive causal analysis is left to future work.

\section {Future Directions}
\subsection{Task Ordering Effects}
Continual learning performance often depends on task sequence, with some orders amplifying forgetting and others resembling curricula \citep{bell2022effect, Kemker_McClure_Abitino_Hayes_Kanan_2018}. Since SFAO regulates updates through thresholds, future work could explore \emph{dynamic robustness} via checkpoints and backtracking: if a new task induces sharp forgetting, training can revert and continue with stricter thresholds, effectively ``learning more cautiously.'' Threshold statistics also provide a proxy for task difficulty, enabling automated adaptation and the design of optimal curricula. Thus, SFAO could both mitigate order sensitivity and serve as a principled tool for quantifying and improving task sequencing across continual learning methods.

\subsection {Per-layer Threshold Training}
Beyond fixed thresholds, a promising direction is \textbf{learning thresholds dynamically}. Thresholds $\lambda_{\text{proj}}^\ell$ and $\lambda_{\text{accept}}^\ell$ can be treated as learnable parameters and optimized via backpropagation with differentiable gating (e.g., sigmoid soft thresholds) or via reinforcement learning \citep{ghasemi2024introductionreinforcementlearning} using long-term metrics like forgetting and compute cost.
\subsection {Dynamically Update and Schedule Thresholds}
Thresholds can be updated with learning rates or schedules, becoming stricter near convergence to reduce interference and improve stability. Strategies include linear warm-up with exponential growth \citep{kalra2024warmuplearningrateunderlying} or piecewise updates \citep{DBLP:journals/corr/Cohen-AddadK16}. Thresholds can also adapt to performance metrics such as forgetting rate or plasticity–stability scores for dynamic sensitivity control.

\section{Related Work}
\subsection{Geometry-Aware Methods}
The geometry-aware perspective in continual learning began as an alternative to memory replay and regularization. Instead of storing data or penalizing parameter shifts, methods like OGD proposed projecting gradients onto subspaces orthogonal to prior tasks, ensuring updates do not interfere with previous knowledge \citep{OGD2019}. This concept was further refined by Gradient Projection Memory (GPM), which used Singular Value Decomposition (SVD) to build compact gradient subspaces and selectively project future updates \citep{GPM2020}. These methods often rely on operations such as orthogonalization or SVD. Although effective, such approaches introduce structural overhead that SFAO addresses through lightweight probabilistic approximations of gradient alignment.

\subsection{Regularization-Based Methods}
Regularization-based methods such as EWC and SI were among the first to gain traction to address catastrophic forgetting \citep{EWC2016, SI2017}. They constrain updates to important parameters using gradient tracking metrics by imposing static penalties (e.g., quadratic loss terms) based on parameter sensitivity. Some recent variants, such as RTRA, combine regularization with adaptive gradient strategies to improve stability and training efficiency \citep{RTRA2023}. These methods model forgetting as a function of parameter importance, introducing fixed or adaptive constraints during optimization. Our work differs in that SFAO modulates updates dynamically based on local alignment with previously learned gradient directions.

\subsection{Theoretical Perspectives on Forgetting}
A growing body of work aims to dissect why catastrophic forgetting occurs in neural networks. Early empirical studies suggest that standard gradient descent optimizers completely overwrite earlier task knowledge \citep{EmpiricalForgetting2013}. Later papers like \citep{DBLP:journals/corr/abs-1908-01091} and \citep{ForgettingLinearRegression2024} show that forgetting also correlates with gradient interference, task similarity, and network capacity. Our method is grounded in this insight, as SFAO addresses the most cited cause of forgetting, gradient interference by filtering out the conflicting directions during learning. Its cosine similarity testing and projection filtering mechanism are rooted in the theoretical observation that overlapping gradients lead to interference. 

\section {Conclusion}
We introduce SFAO, a tunable, similarity-gated extension to OGD that balances forgetting and adaptability using cosine similarity. It employs a practical gating mechanism with interpretable parameters to regulate stability, ensuring consistent memory retention under a fixed compute budget. This design also provides a promising path toward adaptive or scheduled thresholds, offering flexible control strategies in continual learning. SFAO integrates seamlessly with SGD, without requiring additional losses, memory buffers, or architectural overhead.

\section{Limitations}
A key limitation was the instability of regularization-based methods like EWC and SI, requiring us to switch to a WRN28×10 backbone for stable training. This highlights the need for methods robust across diverse architectures and model capacities. While SFAO shows architecture-agnostic stability, the field needs systematic approaches ensuring method robustness without architectural workarounds. Future work should develop continual learning techniques maintaining consistent performance across varying model sizes, enabling deployment in resource-constrained scenarios.

\section{Impact Statement}
This work aims to advance the field of machine learning through methodological contributions. We do not identify specific societal or ethical risks arising from this study beyond those typical of general machine learning research.

\section {Reproducibility Statement}
All experimental code, hyperparameters, and model configurations are provided to ensure reproducibility, and can be found publicly on GitHub at {\url{https://github.com/anixa-s/sfao}.
\newpage
\nocite{*}
\bibliography{latex/custom}
\clearpage
\onecolumn

\appendix
\section{Additional Experiments}
\label{appendix:additional_experiments}
\subsection{Forgetting on Split MNIST}
\begin{figure}[h]
    \centering
    \includegraphics[width=0.95\linewidth]{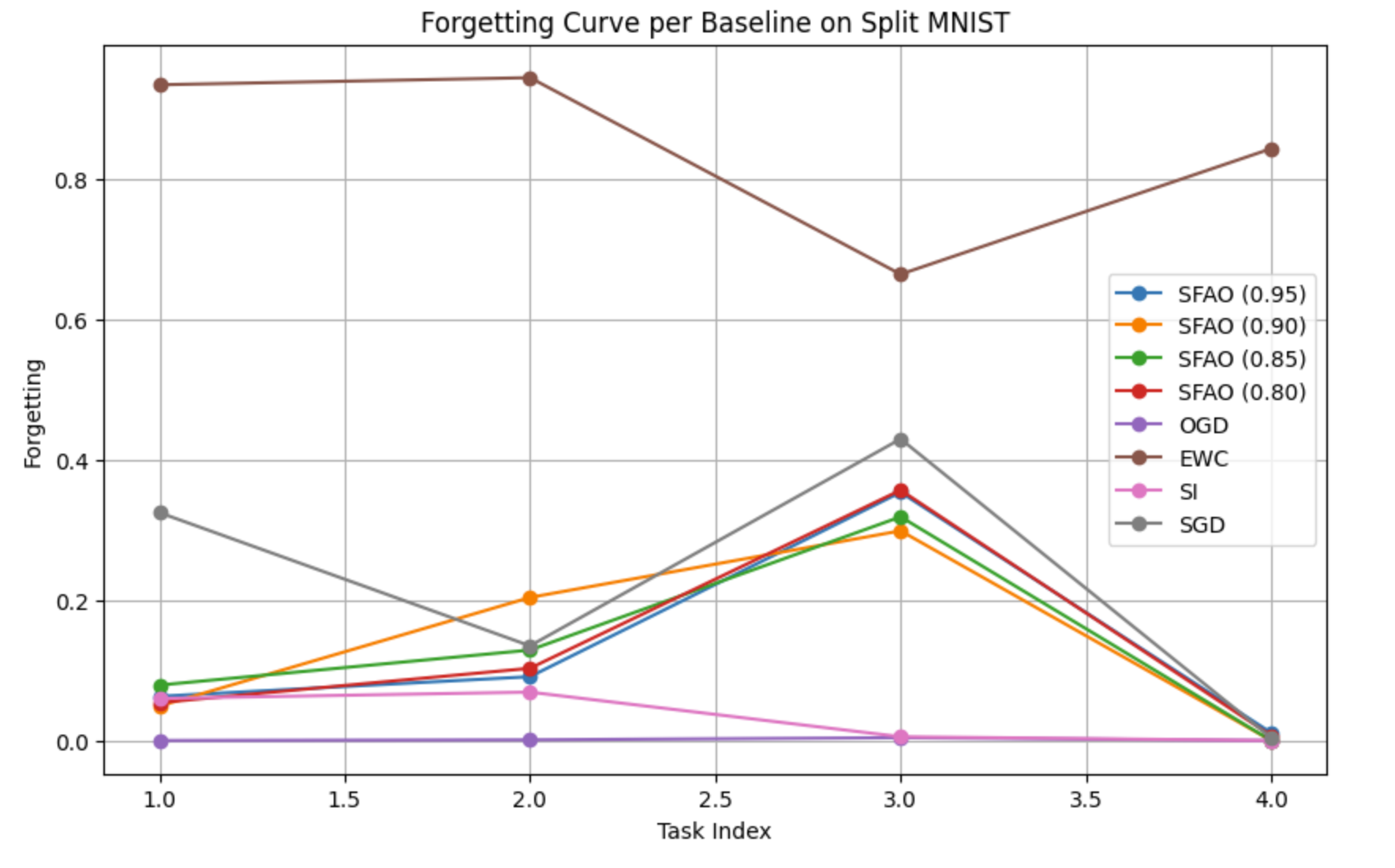}
    \caption{Forgetting curve per baseline on Split MNIST. Forgetting is averaged across previously seen tasks after each new task. There are a total of four tasks.}
    \label{fig:appendix_forgetting}
\end{figure}

\subsection{SFAO and OGD Memory Usage Comparison}
The memory usage was calculated using in the form of megabytes (MB):
\[
\text{Memory (MB)} = \frac{|\mathcal{S}| \times \text{num\_params} \times 4}{1024^2}
\]
where $|\mathcal{S}|$ is the number of stored gradients, \text{num\_params} is the total number of model parameters, and $4$ is the number of bytes per \texttt{float32}.

\begin{table}[ht]
\centering
\begin{tabular}{l|cccc}
\hline
\textbf{Dataset} & \textbf{OGD (MB)} & \textbf{SFAO (MB)} \\
\hline
Split MNIST & 1441.82 & \textbf{153.71} \\
Permuted MNIST (3) & 4367.28 & \textbf{155.28} \\
Permuted MNIST (5) & 7278.00 & \textbf{155.28} \\
\hline
\end{tabular}
\caption{Memory usage (MB) comparison between OGD and SFAO across Split MNIST and Permuted MNIST. For Permuted MNIST, experiments were conducted with $p_1$--$p_3$ permutations (3) and $p_1$--$p_5$ permutations (5)}
\label{tab:memory_usage}
\end{table}

As seen in Table~\ref{tab:memory_usage}, SFAO substantially reduces memory usage on Split MNIST and Permuted MNIST, remaining essentially constant across increasing permutations. This efficiency stems from SFAO’s buffer management strategy: the cosine similarity threshold prevents redundant gradients from entering the buffer, while the discard threshold removes uninformative vectors, keeping $|\mathcal{S}|$ bounded regardless of the number of tasks. On Split CIFAR-100, SFAO uses slightly more memory than OGD due to higher-dimensional and more diverse gradients, which fewer pass the filtering thresholds. This modest increase reflects a trade-off that prioritizes stability and mitigates catastrophic forgetting in complex datasets, demonstrating that SFAO balances efficiency and reliability across different benchmarks.

\subsection{Average Projection Frequency}
\begin{table}[ht]
\centering
\resizebox{0.40\columnwidth}{!}{
\begin{tabular}{l|cc}
\hline
\toprule
\textbf{Dataset} & \textbf{OGD} & \textbf{SFAO} \\
\hline
\midrule
Split MNIST       & 5625 & 200 \\
Permuted MNIST    & 5625 & 200 \\
Split CIFAR-100   & 300$^{*}$ & 200 \\
\bottomrule
\hline
\end{tabular}
}
\caption{Projection frequency per batch for OGD and SFAO across benchmarks. *For Split CIFAR-100, OGD uses a capped gradient memory (\texttt{max\_mem\_dirs} = 1000) and harvest policy (\texttt{dirs\_per\_task} = 120, \texttt{harvest\_batches} = 30), unlike MNIST where projections scale with the full stored gradient set.}
\label{tab:projection_frequency}
\end{table}

As seen in Table ~\ref{tab:projection_frequency} We observe that OGD incurs significantly higher projection counts, especially on MNIST benchmarks where projections scale with the full memory of past gradients. In contrast, SFAO maintains a fixed low projection frequency across all tasks, offering a more computationally efficient alternative. While OGD's capped memory reduces this burden on Split CIFAR-100, SFAO still provides stable performance with substantially fewer projections.

\subsection{Different Cosine Similarity Thresholds vs OGD Accuracy}
\begin{table}[ht]
\centering
\resizebox{\columnwidth}{!}{
\begin{tabular}{l|ccccc}
\hline
\textbf{Dataset} & \textbf{OGD} & \textbf{SFAO (0.95)} & \textbf{SFAO (0.90)} & \textbf{SFAO (0.85)} & \textbf{SFAO (0.80)} \\
\hline
Permuted MNIST (3) & 0.8014 & 0.7815 & 0.7753 & 0.7938 & 0.7815 \\
Permuted MNIST (5)    & 0.7933 & 0.7633 & 0.7612 & 0.7799 & 0.7887 \\
Split CIFAR-10                   & 0.6800 & 0.6525 & 0.6487 & 0.6152 & 0.6219 \\
Split CIFAR-100                  & 0.1562 & 0.1368 & 0.1500 & 0.1436 & 0.1505 \\
\hline
\end{tabular}
}
\caption{Average accuracy comparison of OGD and SFAO across different cosine similarity thresholds on multiple benchmarks. For Permuted MNIST, experiments were conducted with $p_1$--$p_3$ (3 permutations) and $p_1$--$p_5$ (5 permutations).}
\label{tab:threshold_comparison}
\end{table}

As seen in Table~\ref{tab:threshold_comparison}, SFAO demonstrates competitive performance across most datasets, particularly for Permuted MNIST, where thresholds of 0.85 and 0.80 remain close to OGD despite the increased complexity from additional permutations. While OGD generally outperforms SFAO on CIFAR-based benchmarks, the gap is minimal for Split CIFAR-10 and narrows further at lower thresholds (0.80). These results highlight that adaptive cosine thresholds help maintain stability without significantly compromising accuracy, even under more challenging task permutations.

\subsection{Plasticity-Stability Measure}
The Plasticity-Stability Measure (PSM) is a scalar metric that quantifies the trade-off between a model's ability to acquire new knowledge (plasticity) and its ability to retain previously learned knowledge (stability). Formally, it is defined as:  

\[
\text{PSM} = \frac{A_{\text{final}} + A_{\text{avg}}}{2},
\]

where $A_{\text{final}}$ is the final accuracy on the last task and $A_{\text{avg}}$ is the average accuracy across all tasks. Higher values indicate a better balance, while lower values suggest excessive forgetting or limited adaptability.  

\begin{table}[ht]
\centering
\begin{tabular}{l|c|cccc}
\hline
\textbf{Dataset} & \textbf{OGD} & \textbf{SFAO (0.95)} & \textbf{SFAO (0.9)} & \textbf{SFAO (0.85)} & \textbf{SFAO (0.8)} \\
\midrule
\hline
Split MNIST        & 0.4995 & 0.4352 & 0.4310 & 0.4344 & 0.4350 \\
Permuted MNIST (3) & 0.4999 & 0.4783 & 0.4786 & 0.4897 & 0.4791 \\
Permuted MNIST (5) & 0.4958 & 0.4683 & 0.4592 & 0.4742 & 0.4769 \\
CIFAR-100          & 0.2511 & 0.4691 & 0.4636 & 0.4768 & 0.4671 \\
CIFAR-10           & 0.3574 & 0.4593 & 0.4454 & 0.4277 & 0.4320 \\
\bottomrule
\hline
\end{tabular}
\caption{Plasticity-Stability Comparison of OGD and SFAO across different cosine similarity thresholds on multiple benchmarks. For Permuted MNIST, experiments were conducted with $p_1$--$p_3$ (3 permutations) and $p_1$--$p_5$ (5 permutations).}
\label{tab:psm}
\end{table}
As seen in Table~\ref{tab:psm}, SFAO consistently achieves mid-range PSM values across all benchmarks, remaining close to the balance point between $0$ and $1$. This reflects its design choice of prioritizing stability while still maintaining sufficient plasticity to adapt to new tasks. However, OGD’s behavior varies: on MNIST-scale datasets it favors plasticity, while on high-dimensional datasets like CIFAR it skews heavily toward stability at the cost of adaptability. Overall, SFAO’s selective gating yields a steadier stability--plasticity trade-off, making it more reliable across diverse benchmarks.



\section{Algorithms}
\label{appendix:algorithms}
    
\subsection{SFAO (Similarity-Gated Update with Monte Carlo Sampling)}
\begin{algorithm}
\caption{SFAO: Single-layer similarity-gated update (per step)}
\label{alg:sfao-single}
\begin{algorithmic}[1]
\Require Current gradient $g_t \in \mathbb{R}^d$; buffer $\mathcal{B}=\{g_i\}_{i=1}^{B}$; thresholds $\lambda_{\mathrm{proj}} \le \lambda_{\mathrm{accept}}$; Monte Carlo sample size $k \ll B$; buffer policy parameters $(B_{\max}, \tau_{\mathrm{add}}, \tau_{\mathrm{drop}})$
\Ensure Update direction $u_t$ and updated buffer $\mathcal{B}$
\State $\mathcal{C} \gets \textsc{SampleSubset}(\mathcal{B}, k)$ \Comment{uniform without replacement}
\State $\hat{s} \gets \textsc{MCMaxCos}(g_t, \mathcal{C})$
\Statex \hspace{\algorithmicindent} \(\triangleright\) Conservative estimate: \(\hat{s} = \max_{g \in \mathcal{C}} \dfrac{g_t^\top g}{\lVert g_t\rVert\,\lVert g\rVert}\)
\If{$\hat{s} > \lambda_{\mathrm{accept}}$} \Comment{accept}
    \State $u_t \gets g_t$
\ElsIf{$\lambda_{\mathrm{proj}} < \hat{s} \le \lambda_{\mathrm{accept}}$} \Comment{project}
    \State $u_t \gets (I - P_{\mathcal{S}})\, g_t$ \Comment{$\mathcal{S} = \operatorname{span}(\mathcal{B})$}
\Else \Comment{reject}
    \State $u_t \gets 0$
\EndIf
\end{algorithmic}
\end{algorithm}

\subsection{Geometry of the SFAO Update}
\begin{figure}[htbp]
    \centering
    \includegraphics[width=0.7\textwidth]{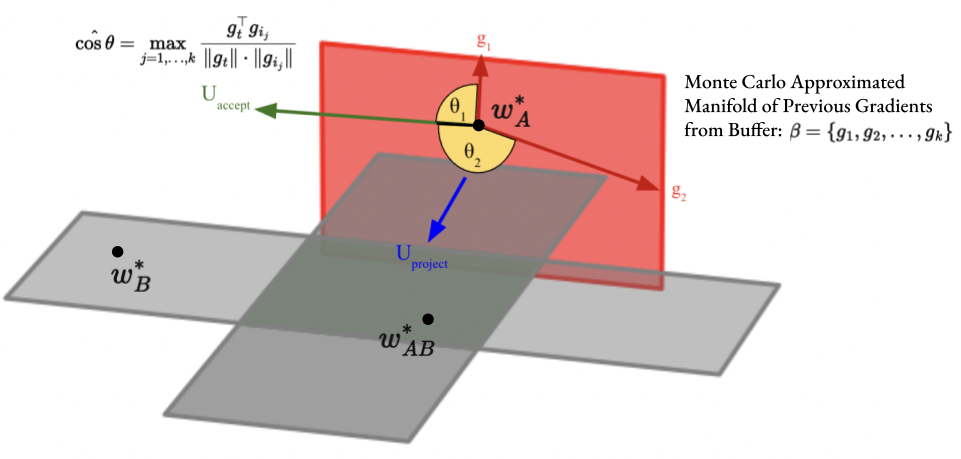}
    \caption{Geometry of the SFAO update. Green ($U_{\text{accept}}$): when the current gradient is sufficiently similar to the buffer $\mathcal{B}$, the update is accepted as is. Blue ($U_{\text{project}}$): otherwise the gradient is orthogonally projected off the subspace spanned by the buffered past gradients $\{g_i\}$ to mitigate interference.}

    \label{fig:methodDiagram}
\end{figure}
\subsection{Per-Layer SFAO: Mathematical Formulation and Algorithm}

\paragraph{Mathematical formulation.}
For layer $\ell \in \{1,\dots,L\}$, let $g_t^{(\ell)}$ be the layer-wise gradient and $\mathcal{B}^{(\ell)} \subset \mathbb{R}^{d_\ell}$ its buffer. With Monte Carlo subset $\mathcal{C}^{(\ell)} \subset \mathcal{B}^{(\ell)}$ of size $k_\ell$, define
\[
s^{(\ell)} \;=\; \max_{g \in \mathcal{C}^{(\ell)}} \frac{\big\langle g_t^{(\ell)}, g\big\rangle}{\|g_t^{(\ell)}\|\,\|g\|}.
\]
Given thresholds $-1 \le \lambda_{\text{proj}}^{(\ell)} \le \lambda_{\text{accept}}^{(\ell)} \le 1$, set the layer update
\[
\mathcal{U}^{(\ell)}\!\left(g_t^{(\ell)}\right)=
\begin{cases}
g_t^{(\ell)}, & s^{(\ell)} > \lambda_{\text{accept}}^{(\ell)} \\[4pt]
\big(I - P_{\mathcal{S}^{(\ell)}}\big)\, g_t^{(\ell)}, & \lambda_{\text{proj}}^{(\ell)} < s^{(\ell)} \le \lambda_{\text{accept}}^{(\ell)} \\[4pt]
0, & s^{(\ell)} \le \lambda_{\text{proj}}^{(\ell)}
\end{cases}
\quad \text{with } \mathcal{S}^{(\ell)} = \mathrm{span}\!\big(\mathcal{B}^{(\ell)}\big).
\]
Concatenate (or assemble) per-layer updates to obtain $u_t = \big(\mathcal{U}^{(1)}(g_t^{(1)}),\dots,\mathcal{U}^{(L)}(g_t^{(L)})\big)$ and update parameters $\theta \leftarrow \theta - \eta\,u_t$ per SGD.
\section{Additional Results and Proofs}
\label{appendix:proofs}
\subsection{Minimizing Gradient Interference Risk}
Recall Eq.~\ref{eq:qp-eq} for minimizing the \emph{interference risk} of an update $u$ against a set $G\subset\R^d$ of stored directions. Here, we solve the constrained optimization problem
\[
\min_{u\in\R^d}\; \tfrac12 \|u-g_t\|_2^2
\quad\text{s.t.}\quad
g^\top u \;=\; 0 \quad \forall\, g\in {G},
\]
We proceed by solving the Lagrangian under the formal constraint $G^\top u = 0$:
\begin{align}
\mathcal{L}(u, \lambda) &= \frac{1}{2}\lVert u - g_t  \rVert_2^2 + \lambda^\top (G^\top u)
\end{align}
Next, we evaluate the Karush–Kuhn–Tucker (KKT) conditions: \\
\textbf{Stationarity:}
\begin{align}
\nabla_u \mathcal{L}(u^*, \lambda^*) &= \nabla_u \left(\frac{1}{2}\lVert u - g_t  \rVert_2^2 + \lambda^\top (G^\top u) \right)\ = 0 \\
&= u - g_t + G\lambda = 0 \\
&\implies u^* = g_t - G\lambda
\end{align}
\textbf{Primal Feasibility:}
\begin{align}
G^\top u &= 0\\
G^\top (g_t - G\lambda) &= 0 \quad \text{\textit{per Stationarity}}\\
G^\top g_t - G^\top G \lambda &=0\\
G^\top g_t &= G^\top G \lambda \\
\implies \lambda^* &= (G^\top G)^\dagger G^\top g_t
\end{align}
Since our problem only involves linear equality constraints, the multipliers $\lambda$ are unconstrained and all equalities are always active, so the dual feasibility and complementary slackness conditions are vacuous and need not be checked. Also, note that $\dagger$ denotes the Moore-Penrose Pseudoinverse.

Substituting $\lambda^*$: 
\begin{align}
u^* &= g_t - G(G^\top G)^\dagger G^\top g_t\\
\implies u^* &= (I - G(G^\top G)^{\dagger} G^\top) g_t
\end{align}

Letting $P_{\mathcal{S}} = G(G^\top G)^{\dagger} G^\top$, we recover Eq.~\ref{eq:orth-proj}:
\[
u^* = (I - P_{\mathcal{S}}) g_t,
\]
which shows that the optimal update is the projection of the current gradient step $g_t$ onto the orthogonal complement of the span of past gradients.
\paragraph{SVD expression.}
Let the thin SVD of $G\in\mathbb{R}^{d\times k}$ be
\[
G = U_r \Sigma_r V_r^\top,
\]
where $r=\mathrm{rank}(G)$, $U_r\in\mathbb{R}^{d\times r}$ and $V_r\in\mathbb{R}^{k\times r}$ have orthonormal columns, and $\Sigma_r\in\mathbb{R}^{r\times r}$ is diagonal with positive entries. Then
\[
G^\top G = V_r \Sigma_r^2 V_r^\top
\quad\Rightarrow\quad
(G^\top G)^\dagger = V_r \Sigma_r^{-2} V_r^\top,
\]
and hence
\[
P_{\mathcal S}
= G(G^\top G)^\dagger G^\top
= (U_r \Sigma_r V_r^\top)(V_r \Sigma_r^{-2} V_r^\top)(V_r \Sigma_r U_r^\top)
= U_r U_r^\top.
\]
Therefore, the optimal update can be written purely in terms of the left singular vectors of $G$:
\[
\boxed{\,u^\star = (I - U_r U_r^\top)\, g_t.\,}
\]
\end{document}